\newtheorem{theorem}{Theorem}
\newtheorem{definition}{Definition}
\newtheorem{remark}{Remark}[section]
\def \R{{\mathbb R}}
\newcommand{\mbR}{\mathbbm R}
\newcommand{\mbN}{\mathbbm N}
\DeclareMathOperator{\argmin}{argmin}
\def\SPD{\text{SPD}}
\title{Intrinsic and extrinsic deep learning on manifolds }
\author{
  Yihao Fang \\
  Department of Applied and Computational Mathematics and Statistics \\
  University of Notre Dame \\
  \texttt{yfang5@nd.edu} \\
   \And
  Ilsang Ohn \\
  Department of Statistics \\
  Inha University \\
  \texttt{ilsang.ohn@inha.ac.kr} \\
     \And
  Vijay Gupta  \\
  School of Electrical and Computer Engineering \\
  Purdue University \\
  \texttt{gupta869@purdue.edu} \\
     \And
  Lizhen Lin \\
  Department of Applied and Computational Mathematics and Statistics \\
  University of Notre Dame\\
  \texttt{lizhen.lin@nd.edu} \\
}
\begin{document}
\maketitle

\begin{abstract}
We propose extrinsic and intrinsic deep neural network architectures as general frameworks for deep learning on manifolds. Specifically, extrinsic deep neural networks (eDNNs) preserve geometric features on manifolds by utilizing an equivariant embedding from the manifold to its image in the Euclidean space. Moreover, intrinsic deep neural networks (iDNNs) incorporate the underlying intrinsic geometry of manifolds via exponential and log maps with respect to a Riemannian structure. Consequently, we prove that the empirical risk of the empirical risk minimizers (ERM) of eDNNs and iDNNs converge in optimal rates. Overall, The eDNNs framework is simple and easy to compute, while the iDNNs framework is accurate and fast converging. To demonstrate the utilities of our framework, various simulation studies, and real data analyses are presented with eDNNs and iDNNs.
\end{abstract}

\keywords{Manifolds \and Deep learning  \and eDNNs and iDNNs}

\section{Introduction}
The last two decades have witnessed an explosive development in deep learning approaches. These approaches have achieved  breakthrough performance in  a broad range of  learning problems from a variety of  applications fields  such as imaging recognition \citep{krizhevsky2012imagenet}, speech recognition \citep{speechhinton}, natural language  processing \citep{Bahdanau2015NeuralMT}   and other areas  of computer vision \citep{Voulodimos2018DeepLF}. Deep learning has also served as the main impetus for the   advancement of recent artificial intelligence (AI) technologies. This unprecedented success has been made possible due to the increasing computational prowess,  availability of large data sets, and the development of efficient  computational algorithms for training deep neural networks. There have been increasing efforts to understand the theoretical  foundations of deep neural networks, including in the statistics community \citep{schmidt-hieber2020, ohn2019smooth, kim2018fast, bauer2019deep, suzuki2018adaptivity, Kohler2019OnTR, Fan2019ASO}.

 Most of these efforts from model and algorithmic development to theoretical understanding, however, have been largely focused on the Euclidean domains.  In a  wide range of problems arising in computer and machine vision,  medical imaging,  network science,  recommender systems,   computer graphics, and so on, one often encounters learning problems concerned with non-Euclidean data, particularly manifold-valued data.  For example, in neuroscience,  data collected in diffusion tensor imaging (DTI), now a powerful tool in neuroimaging for clinical trials,  are  represented by the diffusion matrices,  which are $3\times3$ \emph{positive definite matrices} \citep{dti-ref}.    In engineering and machine learning, pictures or images are often  preprocessed or reduced to a collection of \emph{subspaces} with each data point (an image) in the sample data represented by a subspace \citep{subspacepaper, facialsubpace}.  In machine vision, a digital image can also be represented by a set of $k$-landmarks,  the collection of which form  \emph{landmark-based shape spaces}  \citep{kendall84}. One may also encounter data that are stored as \emph{orthonormal frames} \citep{vecdata}, \emph{surfaces}, \emph{curves}, and \emph{networks} \citep{kolaczyk2020}. The underlying space where these general objects belong falls in the general category of \emph{manifolds} whose geometry is  generally well-characterized,  which should be utilized and incorporated for learning and inference. Thus, there is a natural need and motivation for developing deep neural network models over manifolds. 

 This work aims to develop  general deep neural network architectures on manifolds and take some  steps toward understanding their theoretical foundations. The key challenge lies in incorporating the underlying geometry and structure of manifolds in designing deep neural networks. 
Although some recent works propose  deep neural networks for specific manifolds \citep{Zhang2018GrassmannianLE, matrixback-prop, deep-LG, deepgra},  there is a lack of general frameworks or paradigms that work for arbitrary manifolds. In addition, the theoretical understanding of deep neural networks on manifolds remains largely unexplored. To fill in these  gaps, in this work, we make the following contributions: (1) we develop \emph{extrinsic  deep neural networks (eDNNs)} on manifolds to generalize the popular feedforward networks in the Euclidean space to manifolds via equivariant embeddings. The extrinsic framework is conceptually simple and computationally easy and works for general manifolds where nice embeddings such as \emph{emquivariant embeddings} are available;  
(2) we develop \emph{intrinsic deep neural networks (iDNNs)} for deep learning networks on manifolds employing a Riemannian structure of the manifold; (3) we study theoretical properties such as approximation properties and estimation error of both eDNNs and iDNNs, and (4) we implement various DNNs over a large class of manifolds under simulations and real datasets, including eDNNs, iDNNs and \emph{tangential deep neural networks (tDNNs)}, which is a special case of iDNNs with only one tangent space.

The rest of the paper is organized as follows. In Section 2, we introduce the eDNNs on manifolds and study their theoretical properties. In Section 3, we propose the iDNNs on manifolds that take into account the intrinsic geometry of the manifold. The simulation study and the real data analysis are carried out in Section 4. Our work ends with a discussion.

\section{Extrinsic deep  neural networks (eDNNs) on manifolds}
\label{sec-manifold}
\subsection{eDNNs and equivariant embeddings}

Let $M$ be a $d$-dimensional manifold. Let $(x_i, y_i)$, $i=1,\ldots, n$ be a sample of data from some regression model with input $x_i\in \mathcal X= M$ and output $y_i\in \mathcal Y=\R$, and we propose deep neural networks for learning the underlying function $f: M\rightarrow \mathbb{R}$. The output space can be $\mathcal Y=\{1,\ldots, k \}$  for a classification problem. In this work, we propose to develop two  general deep neural network architectures on manifolds based on an extrinsic and an intrinsic framework, respectively.  The first framework employs  an equivariant  embedding of  a manifold  into the Euclidean space and builds a deep neural network on its image after embedding, which is the focus of this section, while the intrinsic framework utilizes  Riemannian or intrinsic geometry of the manifold for designing the deep neural networks (Section \ref{sec-idnn}).  Our initial focus will be on  proposing appropriate analogs of feed-forward neural networks on manifolds which are popular DNNs in the Euclidean space and  suitable objects for theoretical analysis.  The theoretical properties of the proposed geometric DNNs will be  studied.

Before describing our proposed frameworks, we introduce our mathematical definition of DNNs and related classes. A DNN $\tilde f$ with depth $L$  and a width vector $\mathbf{p} = (p_0 ,\cdots, p_{L+1} ) \in \mathbb{N}^{L+2}$ is a function of the form
\begin{align}
\label{eq-ednnn}
\tilde f(\tilde x):=A_{L+1}\circ \sigma_L\circ A_L\circ\dots\circ\sigma_1\circ A_1(\tilde x),
\end{align}
where $A_l: \R^{p_{l-1}}\rightarrow \R^{p_l}$ is an affine linear map defined by 
$A_l(\tilde x)=\boldsymbol{W}_l\tilde x+\boldsymbol{b}_l$ for $p_l\times p_{l-1}$ weight matrix $\boldsymbol W_l$ and $p_l$ dimensional bias vector $\boldsymbol b_l$, and $\sigma_l: \R^{p_{l}}\rightarrow \R^{p_l}$ is an element-wise nonlinear activation map with the ReLU activation function $\sigma(z)=\max \{0, z\}$ as a popular choice. We referred to the maximum value $\max_{j=1,\dots, L }p_j$ of the width vector as the width of the DNN. We denote $\boldsymbol{\theta}$ as the collection of all weight matrices and bias vectors: 
$\boldsymbol \theta:=\left ((\boldsymbol{W}_1, \boldsymbol{b}_1),\ldots,  (\boldsymbol{W}_{L+1}, \boldsymbol{b}_{L+1})  \right),$ the parameters of the  DNN. Moreover, we denote by $\|\boldsymbol{\theta}\|_0$ the number of non-zero parameter values (i.e., the sparsity) and by $\|\boldsymbol{\theta}\|_\infty$ the maximum of parameters. We denote by  $\mathcal{F}(L, (p_0\sim P\sim p_{L+1}), S, B)$ the class of DNNs with depth $L$, input dimension $p_0$, width $P$, output dimension $p_{L+1}$, sparsity $S$ and the maximum of parameters $B$. For simplicity, if the input and output dimensions are clear in the context, we write $\mathcal{F}(L, P, S, B)=\mathcal{F}(L, (p_0\sim P\sim p_{L+1}), S, B)$.

Let $J: M\rightarrow\R^D$ be an embedding of $M$ into some higher dimensional Euclidean space $\mathbb{R}^D$ ($D\geq d$) and denote the image of the embedding as $\tilde{M} = J(M)$. By definition of an embedding, $J$ is a smooth map such that its differential  $dJ: T_xM\rightarrow T_{J(x)}\R^D$ at each point $x\in M$ is an injective map from its tangent space $T_xM$  to  $T_{J(x)}\R^D$, and $J$ is a homeomorphism between $M$ and its image $\tilde{M}$. Our idea of building \emph{an extrinsic DNN} on manifold relies on building a DNN on the image of the manifold after the embedding. The geometry of the manifold of $M$ can be well-preserved with a good choice of embedding, such as an equivariant embedding which will be defined rigorously in Remark \ref{remark:embedding} below. The extrinsic framework has been adopted for the estimation of Fr\'echet means \citep{ linclt}, regression on manifolds \citep{linjasa}, and construction of Gaussian processes on manifolds \citep{lin2019}, which have enjoyed some notable features such as ease of computations and accurate estimations.  

The key idea of proposing an extrinsic feedforward neural network on a manifold  $M$ is to build a one-to-one version of its image after the embedding.  More specially, we say that $f$ is an \emph{extrinsic deep  neural network (eDNN)} if $f$ is of the form
\begin{align}
\label{embedF}
f(x) = \tilde f(J(x)), 
\end{align}
with a DNN $\tilde{f}$. We denote the eDNN class induced by $\mathcal{F}(L, P, S, B)$ as 

$$\mathcal F_{eDNN}(L,P,S,B):=\{f=\tilde{f}\circ J:\tilde{f}\in\mathcal{F}(L, P, S, B)\}.$$

The extrinsic framework is very general and works for any manifold where a good embedding, such as an equivariant embedding, is available.  Under this framework, training algorithms in the Euclidean space, such as the stochastic gradient descent (SGD) with backpropagation algorithms, can be utilized working with the data $(J(x_i), y_i)$,  $i=1,\ldots ,n$,  with the only additional computation burden potentially induced from working higher-dimensional ambiance space.  In our simulation Section \ref{sec-sim}, the extrinsic DNN yields  better accuracy than the Naive Bayes classifier, kernel SVM, logistic regression classifier, and the random forester classifier for the planar shape datasets. Due to its simplicity and generality, there is a potential for applying eDNNs in medical imaging  and machine vision  for broader scientific impacts. 

\begin{remark}
In \cite{SchmidtHieber2019DeepRN} and \cite{2019chen}, a feedforward neural network was used for nonparametric regression on a lower-dimensional  submanifold embedded in some higher-dimensional ambient space. It showed that with appropriate conditions on the neural network structures, the convergence rates of the ERM  would depend on the dimension of the submanifold $d$ instead of the dimension of the ambient space $D$.  In their framework, they assume the geometry of the submanifold is unknown.  From  a conceptual point of view, our extrinsic framework can be viewed as a special case of theirs by ignoring the underlying geometry. In this case, the image of the manifold $\tilde M=J(M)$ can be viewed as a submanifold in $\R^D$, so their results follow. On the other hand, our embedding framework allows us to work with very complicated manifolds, such as the quotient manifolds for which no natural  ambient coordinates are available.   An example is the planar shape which is the quotient of a typically high-dimensional sphere  consisting of orbits of equivalent classes, with the submanifold structure only arising after the embedding. And such an embedding is typically not isometric. 

In  \cite{2019chen}, the charts were constructed by intersecting small balls in $\R^D$ with the submanifold $M$. In our case, we provide explicit charts of the submanifold based on the knowledge of the geometry of the original manifold $M$ and the embedding map $J$ that works with the ambient coordinates in $\R^D$.  
\end{remark}

\begin{remark}
\label{remark:embedding}
One of the essential steps in employing an eDNN is the choice of the embedding $J$, which  is generally not unique. It is desirable to have  an embedding that preserves as much geometry as possible.  An equivariant embedding  is one type of embedding that preserves a substantial amount of geometry.  Figure \ref{fig-embed} provides a visual illustration of equivariant embedding.  Suppose $M$ admits an action of a (usually `large') Lie group $H$. Then we say that $J$ is an equivariant embedding if we can find a Lie group homomorphism $\phi: H\rightarrow GL(D, \mathbb R)$ from $H$ to the general linear group $GL(D, \mathbb R)$ of degree $D$ acting on $\tilde M$ such that
\begin{align*}
J(hp)=\phi(h)J(p)
\end{align*}
for any $h\in H$ and $p\in M$.  The definition seems technical at first sight. However, the intuition is clear. If a large group $H$ acts on manifolds such as by rotation before embedding, such an action can be preserved via $\phi$  on the image $\tilde M$, thus potentially preserving many of the geometric features of $M$, such as its symmetries. Therefore, the embedding is geometry-preserving in this sense. For the case of the planar shape, which is a collection of shapes consisting of $k$-landmarks  modular Euclidean motions such as rotation, scaling, and translation, which is a quotient manifold of a sphere of dimension $S^{2k-3}$, and the embedding can be given by the Veronese-whitnning embedding which is equivariant under the special unitary group. Another example that's less abstract to understand is the manifold of symmetric positive definite matrices whose embedding can be given as the  $\log$ map (the matrix $\log$ function) into the space of symmetric matrices, and this embedding is equivariant with respect to the group action of the general linear group via the conjugation group action. 
See Section \ref{sec-sim} for some concrete examples of equivariant embeddings for well-known manifolds, such as the space of the sphere, symmetric positive definite matrices, and planar shapes. 
\end{remark}

\begin{figure}
\includegraphics[width=1.2\linewidth, angle=0]{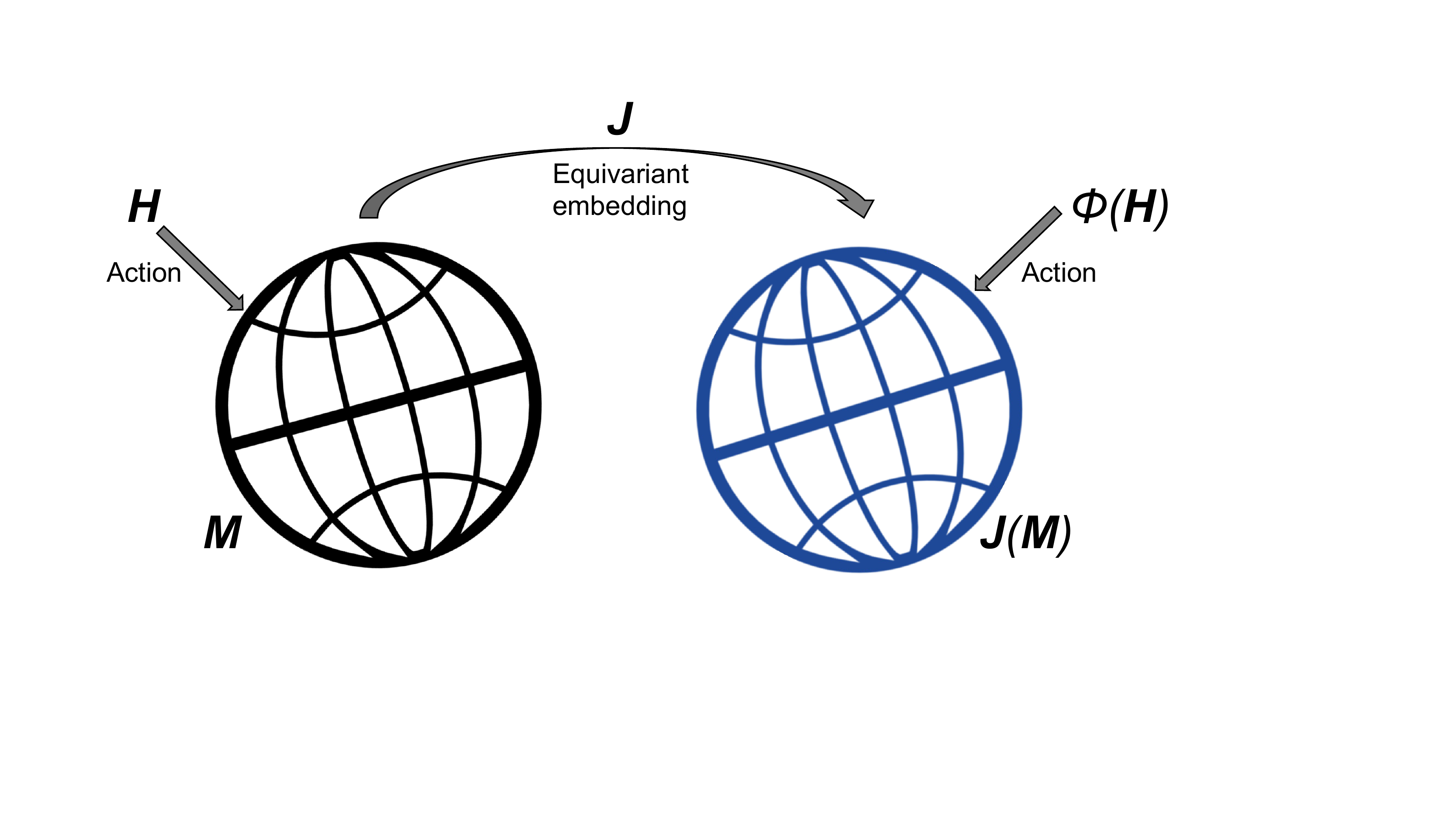}
\caption{\footnotesize An simple illustration of equivariant embeddings}
\label{fig-embed}
\end{figure}

\subsection{Approximation analysis for eDNNs}

In this section, we study the ability of the eDNN class in approximating an appropriate smooth class of functions on manifolds. First, we define the ball  of $\beta$-H\"older functions on a set $U\in\mbR^D$ with radius $K$ as
    \begin{align*}
    \mathcal{C}_D^\beta(U,K) = \{f: \| f\|_{\mathcal{C}^\beta_D(U)}\le K\},
    \end{align*}
where $\|\cdot\|_{\mathcal{C}^\beta_D(U)}$ denotes the $\beta$-H\"older norm defined as
    \begin{align*}
   \|f\|_{\mathcal{C}^{\beta}_D(U)}
   =\sum_{m\in\mbN_0^D:\|m\|_1\le \lfloor\beta\rfloor}\|\partial^{m}f\|_{\infty}
        +\sum_{m\in\mbN_0^D:\|m\|_1=  \lfloor\beta\rfloor}\sup_{x_1,x_2\in U, x_1\neq x_2 }\frac{|\partial^{m}f(x_1)-\partial^{m}f(x_2)|}{\|x_1-x_2\|_\infty^{\beta-  \lfloor\beta\rfloor}}.
    \end{align*}
Here,  $\partial^{m}f$ denotes the partial derivative of $f$ of order $m$ and $\mbN_0:=\mbN\cup\{0\}$. To facilitate smooth function approximation on manifolds, following \cite{SchmidtHieber2019DeepRN}, we  impose an additional smooth assumption on local coordinates which project inputs in an ambient space to a lower dimensional space.

\begin{definition}
We say that a compact $d$-dimensional manifold $M\subset \mbR^D$ has smooth local coordinates if there exist charts $(V_1, \psi_1),\dots,(V_r, \psi_r)$, such that for any $\gamma>0$, $\psi_j \in \mathcal{C}^\gamma_D(\psi_j(V_j))$.
\end{definition}

The next theorem reveals the approximation ability of the eDNN class. For a measure of approximation, we consider the sup norm defined as $ \| f_1-f_2\|_{L^\infty(M)}:=\sup_{x\in M}|f_1(x)-f_2(x)|$ for two functions $f_1,f_2:M\to \mbR$.

\begin{theorem}
\label{thm:ednn:approx}
Let $M\subset \mathbb{R}^D$  be a $d$-dimensional compact manifold and $J:M\to \mbR^D$ be an embedding map. Assume that $J(M)$ has smooth local coordinates.  
Then there exist positive constants $c_1,c_2$ and $c_3$  depending only on  $D,d,\beta,K$ and the surface area of $M$ such that for any $\eta \in (1, 0)$,
    \begin{align*}
        \sup_{f_0:M\to [-1,1] \textup{ s.t }  f_0\circ J^{-1}\in C^\beta_D(J(M), K)} \inf_{f \in \mathcal{F}_{eDNN}(L, P, S, B=1)}\|f - f_0\|_{L^{\infty}(M)} \leq \eta
    \end{align*}
with $L \leq c_1\log\frac{1}{ \eta}$,  $P \leq c_2 \eta^{-\frac{d}{\beta}}$ and $S\le c_3 \eta^{-\frac{d}{\beta}}\log\frac{1}{ \eta}$.
\end{theorem}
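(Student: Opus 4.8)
The plan is to reduce the approximation problem on the manifold $M$ to a Euclidean submanifold approximation problem on the image $\tilde M = J(M)$, and then invoke the ReLU network approximation theory for functions on submanifolds developed in \cite{SchmidtHieber2019DeepRN}.

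First I would exploit the defining structure of an eDNN. Since every $f \in \mathcal F_{eDNN}(L,P,S,B)$ has the form $f = \tilde f \circ J$ with $\tilde f \in \mathcal F(L,P,S,B)$, and since $J$ is a homeomorphism onto $\tilde M$, the change of variables $\tilde x = J(x)$ gives
\begin{align*}
\|f - f_0\|_{L^\infty(M)} = \sup_{x \in M}|\tilde f(J(x)) - f_0(x)| = \sup_{\tilde x \in \tilde M}|\tilde f(\tilde x) - g_0(\tilde x)|,
\end{align*}
where $g_0 := f_0 \circ J^{-1}$. Taking the infimum over the network class on both sides, the assertion is equivalent to the following: for every $g_0 \in \mathcal C^\beta_D(\tilde M, K)$ there is a ReLU network $\tilde f \in \mathcal F(L,P,S,1)$ with $\|\tilde f - g_0\|_{L^\infty(\tilde M)} \le \eta$ obeying the stated size bounds. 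This reduction is the only place where the geometry of $M$ and the embedding enter; all that survives is that $\tilde M$ is a compact $d$-dimensional submanifold of $\R^D$ with smooth local coordinates and that $g_0$ is $\beta$-H\"older on it.

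Next I would construct $\tilde f$ via the localization scheme for submanifolds. Using compactness, cover $\tilde M$ by the finitely many charts $(V_j, \psi_j)$ and fix a smooth partition of unity $\{\rho_j\}$ subordinate to this cover, so that $g_0 = \sum_j \rho_j g_0$ on $\tilde M$. On each chart, $\rho_j g_0$ factors through the $d$-dimensional coordinate map as $h_j \circ \psi_j$ with $h_j := (\rho_j g_0)\circ \psi_j^{-1}$; the smooth local coordinate hypothesis guarantees $\psi_j \in \mathcal C^\gamma_D$ for arbitrarily large $\gamma$, so the dimension-reducing projection can be emulated by a ReLU subnetwork without degrading the H\"older rate. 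Each factor $h_j$ is a $\beta$-H\"older function on a subset of $\R^d$, which I would approximate by the standard local Taylor / piecewise-polynomial ReLU construction (as in \cite{SchmidtHieber2019DeepRN, schmidt-hieber2020}) at accuracy $\eta$; this is the step that produces width and sparsity of order $\eta^{-d/\beta}$ together with depth $O(\log \tfrac1\eta)$. Assembling the pieces requires approximating the products $\rho_j \cdot h_j$ with the logarithmic-depth ReLU multiplication gadget and then summing the finitely many chart contributions by a final affine layer; the additive cost of these operations is absorbed into the constants $c_1,c_2,c_3$.

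The main obstacle is the size bookkeeping through this composition: I must verify that stacking the coordinate-projection subnetwork, the low-dimensional polynomial approximator, the multiplication gadgets, and the terminal summation yields total depth $\lesssim \log\frac1\eta$, width $\lesssim \eta^{-d/\beta}$, and sparsity $\lesssim \eta^{-d/\beta}\log\frac1\eta$, with every constant depending only on $D,d,\beta,K$ and the surface area of $M$ (which controls the number of charts needed to cover $\tilde M$). The crux is that the ambient dimension $D$ enters only through these constants and through the fixed-accuracy smooth approximation of the $\psi_j$, whereas the \emph{rate} in $\eta$ is governed by the intrinsic dimension $d$; this dimension reduction is exactly what the smooth local coordinate assumption buys, and confirming that it propagates correctly through the error and parameter accounting is the technical heart of the argument.
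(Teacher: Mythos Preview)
Your proposal is correct and follows essentially the same approach as the paper: reduce to approximating $g_0=f_0\circ J^{-1}$ on the submanifold $\tilde M=J(M)$ and then apply the submanifold approximation result of \cite{SchmidtHieber2019DeepRN}. The only difference is that the paper invokes Theorem~2 of \cite{SchmidtHieber2019DeepRN} directly as a black box, whereas you additionally sketch its internals (charts, partition of unity, low-dimensional H\"older approximation, multiplication gadgets); this extra detail is consistent with, but not required by, the paper's argument.
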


\begin{proof}
Let $\tilde{f}_0 = f_0\circ J^{-1}$, then $\tilde{f}_0$ is a function on the $d$-dimensional manifold $\tilde{M}=J(M) \subset\mbR^D$. Since $\tilde{M}$ has smooth local coordinates, we can apply Theorem 2 in \cite{SchmidtHieber2019DeepRN}, there exists a network $\tilde{f}\in\mathcal{F}(L, (D\sim P \sim 1), S,1)$ such that $\|\tilde{f}-\tilde{f}_0\|_{L^{\infty}(\tilde{M})}<\eta$ with $L \leq c_1\log\frac{1}{ \eta}$,  $P \leq c_2 \eta^{-\frac{d}{\beta}}$ and $S\le c_3 \eta^{-\frac{d}{\beta}}\log\frac{1}{ \eta}$ for some $c_1>0,c_2>0$ and $c_3>0$. Now, let $f = \tilde{f}\circ J\in \mathcal{F}_{eDNN}(L, (D\sim P \sim 1), S,1)$. Then
\begin{equation*}
\|f-f_0\|_{L^{\infty}(M)} = \|\tilde{f}\circ J-\tilde{f}_0\circ J \|_{L^{\infty}(M)}  = \|\tilde{f}-\tilde{f}_0\|_{L^{\infty}(\tilde{M})}.
\end{equation*}
Therefore, we get the desired result.
\end{proof}

\subsection{Statistical risk analysis for eDNNs}

In this section, we  study the statistical risk of the empirical risk minimizer (ERM) based on the eDNN class. We assume the following regression model
    \begin{align}
    \label{eq:regmodel}
        y_i = f_0(x_i) + \epsilon_i
    \end{align}
for $i=1,\dots, n$, where $x_1,\dots, x_n\in M$ are i.i.d inputs following a distribution $P_x$ on the manifold and $\epsilon_1,\dots, \epsilon_n$ are i.i.d. sub-Gaussian errors. We consider the ERM over the eDNN class such that
\begin{equation}
    \label{eq-erm}
\hat{f}_{eDNN}=\underset{f\in \mathcal F_{eDNN} (L, P, S, B) }{\argmin}\frac{1}{n}\sum_{i=1}^n(y_i-f(x_i))^2.
\end{equation}
A natural question to ask is whether the  ERM type of estimators such as $\hat{f}_{n}$ defined above achieve minimax optimal estimation of $\beta$-H\"older smooth functions on manifolds, in terms of the excess risk  
    \begin{align*}
        R(\hat{f}_{eDNN}, f_0)
        :=E(\hat{f}_{eDNN}(x)-f_0(x))^2
    \end{align*}
where the expectation is taken over the random variable $x\sim P_x$.

\begin{theorem}
\label{thm:ednn:risk}
Assume the model (\ref{eq:regmodel}) with a $d$-dimensional compact manifold $M\subset \mathbb{R}^D$ and an embedding map $J:M\to\mbR^D$. Moreover, assume that $J(M)$ has smooth local coordinates. Then the ERM estimator $\hat{f}_{eDNN}$ over the eDNN class $\mathcal{F}_{eDNN}(L, P, S, B=1)$ in (\ref{eq-erm}) 
with $L \asymp \log(n)$,  $n\gtrsim P\gtrsim n^{d/(2\beta+d)}$ and $S\asymp n^{d/(2\beta+d)}\log n$ satisfies
    \begin{align*}
       \sup_{f_0:M\to [-1,1] \textup{ s.t }  f_0\circ J^{-1}\in C^\beta_D(\mbR^D, K)}  R(\hat{f}_{eDNN},f_0)\lesssim n^{-\frac{2\beta}{2\beta+d}}\log^3n.
    \end{align*}
\end{theorem}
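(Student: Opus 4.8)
The plan is to transport the entire problem to the image submanifold $\tilde M = J(M)\subset\mathbb{R}^D$ and then combine the approximation guarantee of Theorem \ref{thm:ednn:approx} with a standard least-squares oracle inequality, balancing the resulting bias and variance. First I would push the data through the embedding: setting $\tilde x_i = J(x_i)$, the inputs $\tilde x_1,\dots,\tilde x_n$ are i.i.d. from the pushforward $J_{*}P_x$ supported on $\tilde M$, and the model becomes $y_i = \tilde f_0(\tilde x_i)+\epsilon_i$ with $\tilde f_0 = f_0\circ J^{-1}\in\mathcal{C}^\beta_D(\mathbb{R}^D,K)$. Because $J$ is a homeomorphism onto $\tilde M$, the map $\tilde f\mapsto \tilde f\circ J$ is a bijection between $\mathcal{F}(L,P,S,1)$ and $\mathcal{F}_{eDNN}(L,P,S,1)$ that leaves the empirical squared loss unchanged, so the eDNN-ERM $\hat f_{eDNN}$ corresponds exactly to the ERM $\hat{\tilde f}$ over $\mathcal{F}(L,P,S,1)$ on the transported data, and the excess risks coincide: $R(\hat f_{eDNN},f_0)=R(\hat{\tilde f},\tilde f_0)$. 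This reduces the manifold risk bound to nonparametric regression on the $d$-dimensional submanifold $\tilde M$ with smooth local coordinates.

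Next I would establish an oracle inequality for the least-squares ERM over the bounded ReLU class $\mathcal{F}(L,P,S,1)$. For sub-Gaussian noise and targets bounded by $1$, the standard argument (as in Schmidt--Hieber, 2020) yields, up to lower-order terms,
\begin{align*}
E\big[R(\hat{\tilde f},\tilde f_0)\big] \lesssim \inf_{\tilde f\in\mathcal{F}(L,P,S,1)}\|\tilde f-\tilde f_0\|_{L^\infty(\tilde M)}^2 + \frac{\log\mathcal{N}_n}{n},
\end{align*}
where $\mathcal{N}_n$ is the $(1/n)$-sup-norm covering number of the network class. The key complexity input is the entropy bound for sparse ReLU networks, $\log\mathcal{N}_n\lesssim S\,L\,\log(Pn)$, which under the stated scaling of $(L,P,S)$ is of order $S\log^2 n$.

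The final step is to balance the two terms by invoking Theorem \ref{thm:ednn:approx}: choosing the approximation level $\eta=n^{-\beta/(2\beta+d)}$ gives $L\asymp\log n$, $P\asymp\eta^{-d/\beta}=n^{d/(2\beta+d)}$ and $S\asymp\eta^{-d/\beta}\log(1/\eta)\asymp n^{d/(2\beta+d)}\log n$, matching the hypotheses of the theorem. The squared approximation error is then $\eta^2=n^{-2\beta/(2\beta+d)}$, while the complexity term is $\frac{S\,L\,\log(Pn)}{n}\asymp\frac{n^{d/(2\beta+d)}\log^3 n}{n}=n^{-2\beta/(2\beta+d)}\log^3 n$; summing the two reproduces the advertised rate. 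Taking the supremum over $f_0$ with $f_0\circ J^{-1}\in\mathcal{C}^\beta_D(\mathbb{R}^D,K)$ is harmless because every bound above is uniform over this class.

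I expect the genuinely delicate part to be the oracle inequality together with its covering-number companion, i.e. controlling the empirical process $\sup_{\tilde f}|(P_n-P)(\tilde f-\tilde f_0)|$ uniformly over the network class; this is where the $\log$ factors and the truncation needed to handle the unbounded sub-Gaussian noise enter. Since this machinery is already developed for ReLU networks on Euclidean and submanifold domains, the only ingredient specific to our setting is the embedding reduction of the first paragraph, whose lone subtlety is verifying that the smoothness and boundedness of $f_0$ transfer to $\tilde f_0$ (guaranteed by the hypothesis $f_0\circ J^{-1}\in\mathcal{C}^\beta_D$) and that passing to the pushforward measure leaves the risk unchanged, which holds because $J$ is a bijection and the excess risk is an expectation invariant under this reparametrization.
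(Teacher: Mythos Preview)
Your proposal is correct and follows essentially the same route as the paper: reduce the eDNN problem to the standard ReLU class via the embedding (you do this by pushing the data forward through $J$ and identifying the two ERMs; the paper does it by observing that $\|\tilde f_1\circ J-\tilde f_2\circ J\|_{L^\infty(M)}\le\|\tilde f_1-\tilde f_2\|_{L^\infty(\mathbb{R}^D)}$, so the eDNN entropy is dominated by the DNN entropy), then invoke the Schmidt--Hieber oracle inequality together with Theorem~\ref{thm:ednn:approx} and balance bias against the $(S L\log(Pn))/n$ complexity term to obtain $n^{-2\beta/(2\beta+d)}\log^3 n$. The two reductions are equivalent, and your explicit tracking of the $\log$ powers matches the paper's computation.
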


\begin{proof}
For any $\tilde{f_1},\tilde{f_2}\in\mathcal{F}(L, P, S, B=1)$, we have  $\|\tilde{f_2}\circ J-\tilde{f_2}\circ J\|_{L^\infty(M)}= \|\tilde{f_2}-\tilde{f_2}\|_{L^\infty(\tilde{M})}\le \|\tilde{f_2}-\tilde{f_2}\|_{L^\infty(\mbR^D)}$. Hence the entropy of the eDNN class $\mathcal{F}_{eDNN}(L, P, S, B=1)$ is bounded by that of $\mathcal{F}(L, P, S, B=1)$. Thus, by Lemmas 4 and 5 of \cite{schmidt-hieber2020}, we have
\begin{equation*}
R\left(\hat{f}_{eDNN}, f_{0}\right) \lesssim \inf _{f \in \mathcal{F}_{eDNN}(L, P, S, B=1)} \|f-f_0\|^2_{L^{\infty}(M)}+\frac{(S+1) \log \left(2 n(L+1)P^{2L}(D+1)^2\right)+1}{n}.
\end{equation*}
Therefore, by Theorem \ref{thm:ednn:approx}, if we take $L, P$ and $S$ as in the theorem, we get the desired result.
\end{proof}

\section{Intrinsic deep neural networks (iDNNs) on manifolds} 
\label{sec-idnn}

\subsection{The iDNN architectures on a Riemannian manifold}
Despite the generality and  computational advantage enjoyed by eDNNs on manifolds proposed in the previous section, one potential drawback is that an embedding is not always available on complex manifolds such as some intrinsic structure spatial domains. In this section, we propose a class of intrinsic DNNs on manifolds (iDNNs) by employing the intrinsic geometry of a manifold to utilize its exponential and log maps with respect to a Riemannian structure. Some works construct a DNN on the manifold via mapping the points on the manifold to a \emph{single tangent space} (e.g., with respect to some central points of the data)  or proposing DNNs on specific manifolds, in particular, matrix manifolds \cite{deepspd, matrixback-prop}. Using a DNN on a single tangent space approximation cannot provide a good approximation of a function on the whole manifold.  Below we provide a rigorous framework for providing a local approximation of a function on a Riemannian manifold via Riemannian exponential and logarithm maps and thoroughly investigate their theoretical properties. 

The key ideas here are to first cover the manifold with images of the subset of tangent spaces $U_1,\ldots, U_k$ under the exponential map, approximate a local function over the tangent space using DNNs, which are then patched together via the transition map  and a partition  of unity on the Riemannian manifold. Specifically, let $\{x_1, \dots, x_K\in M\}$ be a finite set of points, such that for an open set of subsets $U_k\subset T_{x_k}M$ with $k=1, \dots, K,$ one has $\bigcup_{k=1}^K\exp_{x_k}(U_k)=M.$ Namely, one has $\big\{\big(\exp_{x_k}(U_k), \  \exp_{x_k}\big), \  \ k=1 \dots, K\big\}$ as the charts of the manifold $M$.

For each $k=1, \dots, K$ one has orthonormal basis $v_{k1}, \dots, v_{kd}\in T_{x_k}M$ and respectively the normal coordinates of $x\in\exp_{x_k}(U_k)$
\begin{align*}
v_k^j(x)=\big\langle\log_{x_k}x, v_{kj}\big\rangle \quad \textrm{for} \quad j=1, \dots, d.
\end{align*}
Thus
\begin{align*}
v_k(x)=\big(v_k^1(x), \dots, v_k^d(x)\big)=\sum_{j=1}^dv_k^j(x)v_{kj}\in T_{x_k}M.
\end{align*}
The normal coordinate allows one to perform elementwise non-linear activation to tangent vectors easily. 
 For example, any $1\leq k<l \leq K$ one has the transition map on $\exp_{x_l}(U_l)\cap\exp_{x_k}(U_k)$
  \begin{align*}
v_k^j(x)=\big\langle\log_{x_k}x, v_{kj}\big\rangle = \big\langle\log_{x_k}\exp_{x_l}v_l(x), v_{kj}\big\rangle \quad \textrm{for} \quad j=1, \dots, d.
  \end{align*}

A compact manifold $M$ always  admits  a \emph{finite partition of unity} $\big\{\tau_k, \ k=1, \dots, K\big\}$, $\tau_k(\cdot): M\rightarrow \mathbb R_+$ such that $\sum _{k=1}^K\tau_k (x)=1,$ and for every $x\in M$ there is a neighbourhood of $x$ where all but a finite number of functions are $0$ (e.g., Proposition 13.9 of \cite{loring2011introduction}). Therefore, for each function $f:M\rightarrow\mathbb{R}$, we can write
\begin{align}
\label{eq-idnnarchitecture}
f(x)=\sum_{k=1}^K\tau_k(x)f\Big(\exp_{x_k}\big(\log_{x_k}x\big)\Big)\doteq\sum_{k=1}^K\tau_k(x) f_k(\log _{x_k}(x)).
\end{align}
As a result,  one can model the compositions $f_k=f\circ\exp_{x_k}:U_k\rightarrow\mathbb{R}$ instead of $f,$ for which we propose to use DNN. This idea gives rise to our iDNN architecture  $f(x)=\sum_{k=1}^K\tau_k(x)f_k\left(\log _{x_k}(x)\right)$. Figure \ref{fig:idnn} illustrates the core ideas of the iDNN architecture. Given a set of points $\{x_1,\dots,x_K\}\subset M$, we define the iDNN class with depth $L$, width $P$, sparsity $S$ and the maximum of parameters $B$ as
\begin{align}
        \mathcal{F}_{iDNN}(L,P, S, B)=\left\{\sum_{k=1}^K\tau_k(x)f_k\left(\log _{x_k}(x)\right):f_k\in \mathcal{F}(L,(d\sim P\sim 1), S, B)\right\}.
\end{align}

\begin{figure}[ht]
    \centering
    \includegraphics[width=15cm]{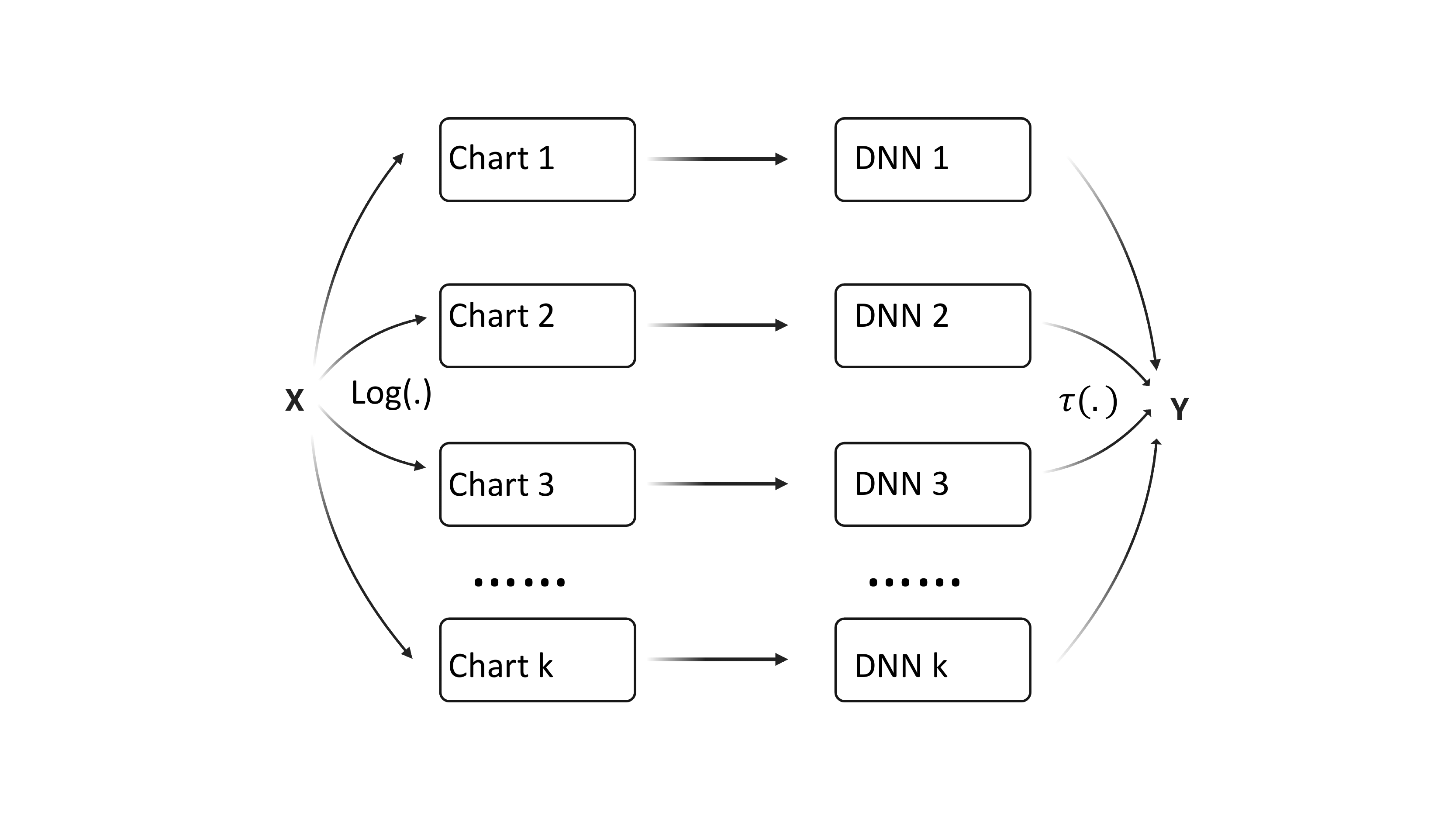}
    \caption{\footnotesize The iDNN architecture on a Riemannian manifold $M$. Given the base points $\{x_1, \dots, x_K\in M\}$ and the charts $\big\{U_k\subset T_{x_k}M, k=1 \dots, K\big\}$ on the manifold $M$, the input data $X$ is mapped to the $kth$ chart $U_k$ after the log map $\log_{x_k}(.)$. Afterward, the transformed data is fed into the DNN $f_k$ on each chart $k$. The final prediction $Y$ is given by the partition of unity $\tau(.)$ as $Y = \sum_{k=1}^K\tau_k(x)f_k\left(\log _{x_k}(x)\right)$. }
    \label{fig:idnn}
\end{figure}

\subsection{Approximation analysis for iDNNs}
In this section, we investigate the approximation theory for the iDNN for smooth functions on manifolds. 

\begin{theorem}
\label{thm:idnn:approx}
Let $M\subset \mathbb{R}^D$ be a $d$-dimensional compact manifold. Assume that $\exp_{x_k}\in \mathcal{C}_D^{\gamma}(U_k)$ for $\gamma>\beta$ for every $k=1,\dots, K$.  Then there exist positive constants $c_1,c_2$ and $c_3$  depending only on $D,d,\beta,K$ and the surface area of $M$  such that for any $\eta \in (1, 0)$,
\begin{align*}
 \sup_{f_0:M\mapsto[-1, 1] \text{ s.t. } f_0\in\mathcal{C}_D^{\beta}(M, K)}\inf_{f\in \mathcal{F}_{iDNN}(L,P, S, B=1)}\|f-f_0\|_{L^{\infty}(M)}\leq \eta.    
\end{align*}
with $L \leq c_1\log\frac{1}{ \eta}$,  $P \leq c_2 \eta^{-\frac{d}{\beta}}$ and $S\le c_3 \eta^{-\frac{d}{\beta}}\log\frac{1}{ \eta}$.
\end{theorem}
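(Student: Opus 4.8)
The plan is to exploit the exact partition-of-unity identity (\ref{eq-idnnarchitecture}) to turn a single global approximation problem on $M$ into $K$ independent local problems on the tangent charts, each of which is a standard Euclidean problem already handled by the existing feed-forward approximation theory. Applying (\ref{eq-idnnarchitecture}) to the target $f_0$ gives, with no approximation error, the decomposition $f_0(x)=\sum_{k=1}^K\tau_k(x)\,f_{0,k}(\log_{x_k}x)$ with $f_{0,k}:=f_0\circ\exp_{x_k}:U_k\to\mathbb R$. The iDNN candidate will have the same shape, $f(x)=\sum_{k=1}^K\tau_k(x)\,f_k(\log_{x_k}x)$, where each $f_k$ is a feed-forward network approximating $f_{0,k}$ on $U_k$; note that $\tau_k$ and $\log_{x_k}$ are fixed parts of the architecture, so only the $f_k$ need to be constructed.

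First I would establish the regularity of the local targets: each $f_{0,k}=f_0\circ\exp_{x_k}$ is $\beta$-H\"older on $U_k\subset\mathbb R^d$. Since $f_0\in\mathcal C^\beta_D(M,K)$ and $\exp_{x_k}\in\mathcal C^\gamma_D(U_k)$ with $\gamma>\beta$, this follows from a standard composition lemma for H\"older classes: the composition of a $\beta$-H\"older function with a $\gamma$-H\"older map with $\gamma\ge\beta$ is again $\beta$-H\"older, with H\"older norm bounded by a constant depending only on $\beta,\gamma,d,D$ and $\|\exp_{x_k}\|_{\mathcal C^\gamma_D(U_k)}$. Thus $f_{0,k}\in\mathcal C^\beta_d(U_k,K')$ for a radius $K'$ that, by compactness of $M$ and finiteness of the atlas, can be chosen uniformly in $k$. \emph{This composition step is the main obstacle}: it is where the hypothesis $\gamma>\beta$ enters, and one must track, via Fa\`a di Bruno's formula, how the derivatives of $\exp_{x_k}$ up to order $\lfloor\beta\rfloor$ propagate into the H\"older norm of the composite, with $\gamma>\beta$ ensuring the remainder terms retain the correct H\"older exponent.

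Next I would invoke a Euclidean approximation theorem for $\beta$-H\"older functions on a bounded subset of $\mathbb R^d$ (e.g.\ Theorem 1 of \cite{schmidt-hieber2020}, or Theorem 2 of \cite{SchmidtHieber2019DeepRN} specialized to a cube) to approximate each $f_{0,k}$. For every $k$ this yields a network $f_k\in\mathcal F(L,(d\sim P\sim 1),S,1)$ with $\|f_k-f_{0,k}\|_{L^\infty(U_k)}\le\eta$ and with $L\le c_1\log\frac1\eta$, $P\le c_2\eta^{-d/\beta}$, and $S\le c_3\eta^{-d/\beta}\log\frac1\eta$. Because $K$ is fixed and the radii $K'$ are uniform in $k$, the constants $c_1,c_2,c_3$ depend only on $D,d,\beta,K$ and the surface area of $M$, as claimed; crucially, the input dimension of each $f_k$ is $d$, which produces the intrinsic rate $\eta^{-d/\beta}$ rather than an ambient $\eta^{-D/\beta}$.

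Finally I would patch the local networks together and control the global error. Taking the partition $\{\tau_k\}$ subordinate to the cover $\{\exp_{x_k}(U_k)\}$, so that $\tau_k(x)\ne 0$ forces $\log_{x_k}x\in U_k$, and setting $f=\sum_{k=1}^K\tau_k(\cdot)\,f_k(\log_{x_k}(\cdot))\in\mathcal F_{iDNN}(L,P,S,1)$, the exact decomposition of $f_0$ gives
\begin{align*}
|f(x)-f_0(x)|\le\sum_{k=1}^K\tau_k(x)\,\big|f_k(\log_{x_k}x)-f_{0,k}(\log_{x_k}x)\big|\le\eta\sum_{k=1}^K\tau_k(x)=\eta,
\end{align*}
using $\tau_k\ge 0$ and $\sum_k\tau_k\equiv 1$. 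Taking the supremum over $x\in M$ and then over the $\beta$-H\"older ball of targets yields the stated bound.
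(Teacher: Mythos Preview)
Your proposal is correct and follows essentially the same route as the paper: decompose $f_0$ via the partition-of-unity identity into local targets $f_{0,k}=f_0\circ\exp_{x_k}$, approximate each on $U_k\subset\mathbb R^d$ by a standard feed-forward result (the paper invokes Theorem~1 of \cite{SchmidtHieber2019DeepRN}), and then use $\tau_k\ge0$, $\sum_k\tau_k\equiv1$ to bound the global error by the worst local error. The only difference is that you spell out the $\beta$-H\"older regularity of $f_{0,k}$ via a composition/Fa\`a di Bruno argument, whereas the paper simply records it as holding ``by assumption''; otherwise the two proofs are identical.
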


\begin{proof}
We construct a DNN approximating $f_{0k}=f_0\circ \exp_{x_k}$ for each $k=1,\dots, K$. Note that $f_{0k}$ is $\beta$-H\"older smooth by assumption. Therefore, by Theorem 1 in \cite{SchmidtHieber2019DeepRN}, there exist DNNs $f_1,\dots, f_K\in\mathcal{F}(L, (d\sim P \sim 1), S,1)$ such that $\|f_k-f_{0k}\|_{L^{\infty}(U_k)}<\eta$ with $L \leq c_1\log\frac{1}{ \eta}$,  $P \leq c_2 \eta^{-\frac{d}{\beta}}$ and $S\le c_3 \eta^{-\frac{d}{\beta}}\log\frac{1}{ \eta}$ for some $c_1>0,c_2>0$ and $c_3>0$. Now, let $f = \sum_{k=1}^K\tau_k(x)f_k(\log _{x_k}(x))\in \mathcal{F}_{iDNN}(L,P, S,1)$. Then
    \begin{align*}
        \|f-f_0\|_{L^{\infty}(M)}
        &=\sup_{x\in M }\left|\sum_{k=1}^K\tau_k(x)f_k(\log _{x_k}(x))-\sum_{k=1}^K\tau_k(x)f_{0k}(\log _{x_k}(x))\right|\\
       &\le \sup_{x\in M }\sum_{k=1}^K\tau_k(x)\left|f_k(\log _{x_k}(x))-f_{0k}(\log _{x_k}(x))\right|\\
        &\le \max_{1\le k\le K}\left\|f_k-f_{0k}\right\|_{L^{\infty}(U_k)}<\eta
    \end{align*}
which completes the proof.
\end{proof}

\begin{remark}
    \cite{SchmidtHieber2019DeepRN} and \cite{2019chen} propose feedforward neural networks on a manifold that's embedded in a higher-dimensional Euclidean space.  In the approximation theory  of \cite{SchmidtHieber2019DeepRN} and \cite{2019chen}, they utilize local charts and partition of unities, but due to the unknown geometry of the manifold, they need to use DNNs to approximate the local charts $\psi_j$s, the partition of unities functions as well as the mappings $f\circ \psi_j^{-1}$.  Under our iDNN framework, we utilize the Riemmanian geometry of the manifold and the $\log$ map.  Further, the partition of utilities functions can be constructed so there is no need to approximate them with DNNs.
    
\end{remark}

\subsection{Statistical risk analysis for iDNNs}

In this section, we  study the statistical risk of the ERM over the iDNN class given by
    \begin{equation}
        \label{eq-erm-idnn}
        \hat{f}_{iDNN}=\underset{f\in \mathcal F_{iDNN} (L, P, S, B) }{\argmin}\frac{1}{n}\sum_{i=1}^n(y_i-f(x_i))^2.
    \end{equation}
for the nonparametric regression model (\ref{eq:regmodel}) where the true function $f_0$ is $\beta$-H\"older smooth on a manifold. The following theorem shows that the iDNN estimator attains the optimal rate. We omit the proof since it is almost the same as the proof of Theorem \ref{thm:ednn:risk} except using the approximation result for the iDNN class given in Theorem \ref{thm:idnn:approx}.

\begin{theorem}
Assume the model (\ref{eq:regmodel}) with a $d$-dimensional compact manifold $M$ isometrically embedded in $\mbR^D$. Then the ERM estimator $\hat{f}_{iDNN}$ over the iDNN class $\mathcal{F}_{iDNN}(L, P, S, B=1)$ in (\ref{eq-erm-idnn}) 
with $L \asymp \log(n)$,  $n\gtrsim P\gtrsim n^{d/(2\beta+d)}$ and $S\asymp n^{d/(2\beta+d)}\log n$ satisfies
    \begin{align*}
       \sup_{f_0:M\mapsto[-1, 1] \text{ s.t. } f_0\in\mathcal{C}_D^{\beta}(M, K)}R(\hat{f}_{iDNN},f_0)\lesssim n^{-\frac{2\beta}{2\beta+d}}\log^3n.
    \end{align*}
\end{theorem}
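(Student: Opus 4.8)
The plan is to follow the same bias--variance decomposition used in the proof of Theorem \ref{thm:ednn:risk}, replacing the embedding-based complexity bound with one tailored to the partition-of-unity structure of the iDNN class. First I would establish an oracle inequality of the form
$$R(\hat f_{iDNN}, f_0) \lesssim \inf_{f \in \mathcal{F}_{iDNN}(L,P,S,B=1)} \|f - f_0\|_{L^\infty(M)}^2 + \frac{\textup{(complexity term)}}{n},$$
by invoking Lemmas 4 and 5 of \cite{schmidt-hieber2020}. These lemmas require a bound on the metric entropy (sup-norm covering numbers) of the hypothesis class, so the crux of the argument is to show that the entropy of $\mathcal{F}_{iDNN}$ is, up to the fixed factor $K$, no larger than that of a single feedforward class $\mathcal{F}(L,(d\sim P\sim 1),S,B=1)$, whose entropy is already controlled in \cite{schmidt-hieber2020}.

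For the entropy bound, consider two iDNNs $f = \sum_k \tau_k(\cdot) f_k(\log_{x_k}(\cdot))$ and $g = \sum_k \tau_k(\cdot) g_k(\log_{x_k}(\cdot))$. The partition-of-unity property $\sum_k \tau_k = 1$ with $\tau_k \ge 0$ gives the pointwise estimate
$$\|f - g\|_{L^\infty(M)} \le \max_{1 \le k \le K} \|f_k - g_k\|_{L^\infty(U_k)},$$
exactly the bound already exploited in the proof of Theorem \ref{thm:idnn:approx}. Consequently, taking the product of $\epsilon$-covers of the $K$ component classes (each measured in sup norm over $U_k$) produces an $\epsilon$-cover of $\mathcal{F}_{iDNN}$, so upon taking logarithms the entropy is at most $K$ times that of a single class, and the constant $K$ is absorbed into the implicit constants. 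Because the log maps $\log_{x_k}$ and the weights $\tau_k$ are fixed, geometry-determined functions rather than trainable parameters, they contribute nothing to the parameter count; the effective number of nonzero weights is $O(KS)$, of the same order as $S$.

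The third step is to balance the two terms by feeding in Theorem \ref{thm:idnn:approx}. Under the isometric embedding of $M$ into $\R^D$ the exponential maps $\exp_{x_k}$ are smooth on the compact manifold, so the hypothesis $\exp_{x_k} \in \mathcal{C}_D^\gamma(U_k)$ for every $\gamma > \beta$ is automatically met, and the approximation theorem supplies some $f \in \mathcal{F}_{iDNN}$ with $\|f - f_0\|_{L^\infty(M)} \le \eta$ at the budgets $L \le c_1 \log\frac{1}{\eta}$, $P \le c_2 \eta^{-d/\beta}$, $S \le c_3 \eta^{-d/\beta}\log\frac{1}{\eta}$. Choosing $\eta \asymp n^{-\beta/(2\beta+d)}$ (equivalently $L \asymp \log n$, $P \asymp n^{d/(2\beta+d)}$, $S \asymp n^{d/(2\beta+d)}\log n$) makes the squared approximation error $\asymp n^{-2\beta/(2\beta+d)}$, while the complexity term, with $\log(P^{2L}) \asymp \log^2 n$, becomes $\asymp n^{-2\beta/(2\beta+d)}\log^3 n$; the two balance to yield the claimed rate uniformly over the $\beta$-H\"older ball.

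The main obstacle I anticipate is the entropy bound for this summed, partition-of-unity-weighted class, as opposed to the bare composition $\tilde f \circ J$ of the eDNN case: one must verify carefully that the weights $\tau_k$ and the overlapping chart structure do not inflate the covering number beyond the constant factor $K$, and that the oracle inequality of \cite{schmidt-hieber2020}, stated originally for a plain feedforward class, continues to apply once the entropy of $\mathcal{F}_{iDNN}$ is controlled in this manner. Everything else---the approximation input from Theorem \ref{thm:idnn:approx} and the final tuning of $L, P, S$---is routine and identical to the eDNN argument.
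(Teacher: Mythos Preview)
Your proposal is correct and follows essentially the same route as the paper's own proof: the paper uses exactly the partition-of-unity estimate $\|f-f'\|_{L^\infty(M)}\le \max_{1\le k\le K}\|f_k-f_k'\|_{L^\infty}$ to conclude that the entropy of $\mathcal{F}_{iDNN}$ is at most $K$ times that of $\mathcal{F}(L,P,S,B)$, and then proceeds verbatim as in Theorem~\ref{thm:ednn:risk}. Your write-up is in fact more careful than the paper's, which does not spell out the balancing of terms or the applicability of the oracle inequality in the detail you do.
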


\begin{proof}
For any two iDNNs $f(\cdot) = \sum_{k=1}^K\tau_k(\cdot)f_k(\log _{x_k}(\cdot))$ and $f'(\cdot)  = \sum_{k=1}^K\tau_k(\cdot)f_k'(\log _{x_k}(\cdot))$ in $ \mathcal{F}_{iDNN}(L,P, S,B)$, we have
    \begin{align*}
        \|f-f'\|_{L^\infty(M)}
       &\le \sup_{x\in M }\sum_{k=1}^K\tau_k(x)\left|f_k(\log _{x_k}(x))-f_{0k}(\log _{x_k}(x))\right|\\
        &\le \max_{1\le k\le K}\left\|f_k-f_{0k}\right\|_{L^\infty(\mbR^d)}.
    \end{align*}
Therefore, the entropy of $ \mathcal{F}_{iDNN}(L,P, S,B)$ is bounded by the $K$-times of the entropy of the class $ \mathcal{F}(L,P, S,B)$. So by the same way as in the proof of Theorem \ref{thm:ednn:risk}, we get the desired.
\end{proof}

\section{Simulations study and real data analysis}
\label{sec-sim}
 Applications will illustrate the practical impact and utilities of our methods to simulated data sets and some important real data sets, such as in the  context of the AFEW database, HDM95 database, the ADHD-200 dataset, an HIV study, and others.  The proposed eDNNs, tDNNs, and iDNNs will be applied to learning problems such as regression and classification on various manifolds, including the sphere, the planar shapes, and the manifold of symmetric positive definite matrices, which are the most popular classes of manifolds encountered in  medical diagnostics using medical imaging and  image classification in digital imaging analysis. For the eDNN models, we list explicit embeddings below and the corresponding lie groups that act on them equivariantly. For the iDNN models, we elaborate the exponential map and inverse-exponential (log) map on those manifolds. As mentioned before, the tDNN model is the special case of the iDNN model when $K=1$, which utilizes the exponential map and inverse-exponential map as well.

 \subsection{Sphere}
One of the simplest manifolds of interest is the sphere in particular in directional statistics and spatial statistics \citep{fisherra53, maridajpuu, fisher87, jun2008, chunfeng}. Statistical analysis of  data from  the two-dimensional sphere $S^2$, often called directional statistics, has a fairly long history \citep{fisherra53, maridajpuu, fisher87}. Modeling on the sphere has also received recent attention  due to  applications in spatial statistics, for example, global models for climate or satellite data \citep{jun2008, chunfeng}.  

To build the eDNN on the sphere, first note that $S^d$ is a submanifold of $\R^{d+1}$, so that the inclusion map $J$ serves as a natural embedding of $S^d$ into $\mathbb R^{d+1}$.  It is easy to check that $J$ is an equivariant  embedding with  respect to the Lie group $H=SO(d+1)$, the group of $d+1$ by $d+1$ special orthogonal matrices. Intuitively speaking, this embedding preserves a lot of  symmetries of the sphere. On the other hand, one can use the geodesics (in this case, the big circles on the sphere) for which the closed-form exponential map and inverse-exponential map are available to construct the iDNN model. Furthermore, given the base points $x_i, i=1,...,k$, one has $\tau(x) = e^{-\frac{1}{1 - \|x - x_i\|^2}}$ by utilizing the bump function on the sphere. 

In this simulation study, we consider the classification problem in terms of the Von Mises-Fisher distribution (MF) on the sphere $S^2$, which has the following density: 

\begin{equation}
f_{\mathrm{MF}}(y ; \mu, \kappa) \propto \exp \left(\kappa \mu^{T} y\right),
\end{equation}
where $\kappa$ is a concentration parameter with $\mu$ a location parameter. Then we simulate the data from $K$ different classes on the sphere $ S^d$ via a mixture of MF as: 
\begin{align}
 & u_{i1},...,u_{i10} \sim \mathrm{MF}(\mu_i,\kappa_1), i = 1,..,K.\\
& m_{ij} \sim unif\{u_{k1},...,u_{k10}\}, \quad x_{ij} \sim \mathrm{MF}(m_{ij},\kappa_2), \\
& i = 1,..,K, \quad j= 1,...,N.
\end{align}

Here $x_{ij}$ is the $j$th sample from $i$th class, $\mu_i$ is the mean for the $i$th class, and $\kappa$ is the dispersion for all classes. We first generated $10$ means $u_{i1},...,u_{i10}$ from the $\mathrm{MF}$ distribution for $i$th class. Then for each class, we generated $N$ observations as follows: for each observation $x_{ij}$, we randomly picked $m_{ij}$ from $u_{k1},...,u_{k10}$ with probability $1/10$, and then generated a $\mathrm{MF}(m_{ij},\kappa_2)$, thus leading to a mixture of $\mathrm{MF}$ distribution. Moreover, $\kappa_1$ controls the dispersion of the intermediate variable $m_{ij}$ while $\kappa_2$ controls the dispersion of observations $x_{ij}$. Figure \ref{fig:sphere} shows observations from the mixture model on the sphere under different dispersions. 

\begin{figure*}[htbp]
    \centering
    \subfloat[$\kappa_1=10, \kappa_2 = 50$]{
    \includegraphics[width=.35\textwidth]{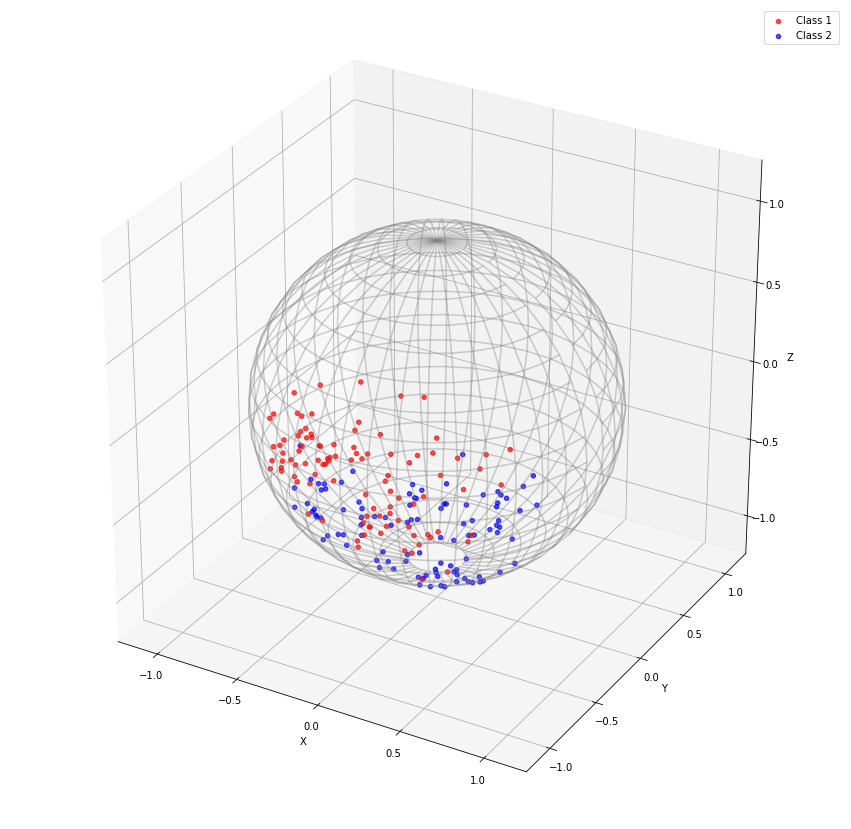}}
    \subfloat[$\kappa_1=8, \kappa_2 = 40$]{
    \includegraphics[width=.35\textwidth]{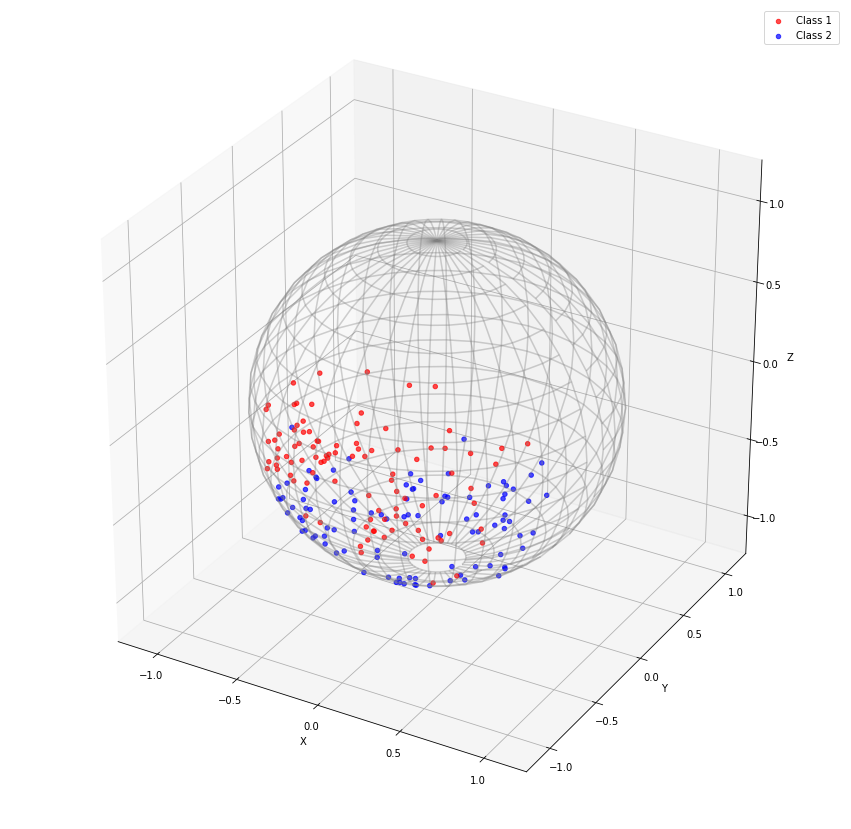}}
    \subfloat[$\kappa_1=4, \kappa_2 = 20$]{
    \includegraphics[width=.35\textwidth]{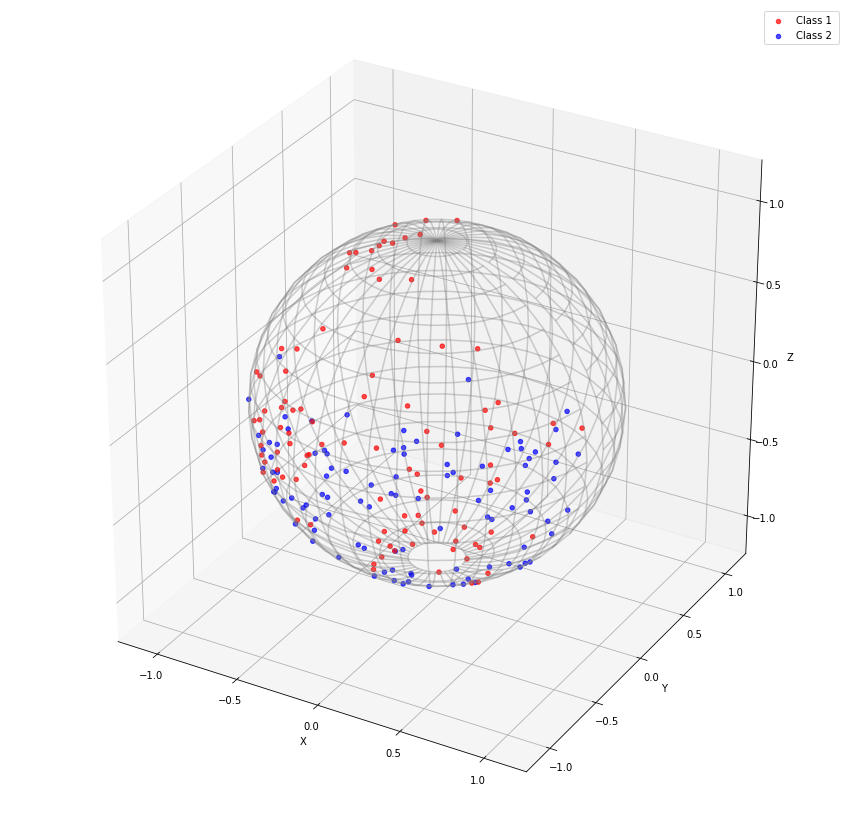}}\\
 
    \caption{Observations for $K =2$ classes from the mixture $\mathrm{MF}$ distribution, $N =100$. The nonlinear boundary between the two classes becomes hard to see with bare eyes due to the surging variance of the data as the $\kappa_1, \kappa_2$ dropping, which makes the classification problem harder.  }
  \label{fig:sphere}
\end{figure*}

In the following simulation, we follow the mixture model on the hyper-sphere $S^2, S^{10}, S^{50}$ with $K =2$, $N = 2000$, $\kappa_1 = 4$, $\kappa_2 = 20$ and divide the data into 75 percent training set and 25 percent test set. We repeat this split $50$ times. Then we compare the eDNN, tDNN, iDNN models to other competing estimators via the classification accuracy on the test set in Table \ref{tab:test-sphere}. 

For competing estimators, we consider the k-nearest neighbors (kNN), the random forest (RF), the logistic regression (LR), and the support vector machine (SVM) with the radial basis function (RBF) kernel. The tuning parameters in each method are selected by evaluation on a validation data set whose size is $25\%$ of the training set.

For all DNN models, we apply a network architecture of $5$ hidden layers with the numbers of widths $(100,100,100,100,100)$. The DNN model is the same as the eDNN model on Euclidean since the embedding map from the sphere to the higher Euclidean space is the identity map. In the tDNN model, we consider the $Frechet$ mean of the training set as the base point and transform all data in the batch to tangent vectors before feeding to the neural network. In the iDNN model, we consider the north and south poles $(\pm 1,0,..,0)$ as base points and use the neural network with the same structure for all tangent spaces. All models are trained with Adam optimizer \cite{kingma2014adam}. As shown in Table 1, our tDNN model and iDNN model outperform other competing estimators. Specifically, our tDNN models achieve the best accuracy $94.88 \pm 0.53$ and $97.13 \pm 0.39$ in the low dimensional cases. Our iDNN models obtained the best result $ 80.72 \pm 0.94$ and $68.43 \pm 1.20$ in the high dimensional spaces.

\begin{table}[htbp]
    \centering
    \caption{The test accuracy is calculated over $50$ random split. The $5$-layers network (with $100$ hidden nodes in each layer) is used for our DNN models in all experiments. Our tDNN model achieved the best result when the dimension was low $S^2, S^{10}$, while our iDNN is the best in high-dimension cases ($S^{50}, S^{100}$). Moreover, our tDNN, iDNN models show better accuracy than the classical DNN, especially in high-dimensional cases.  }
    \begin{tabular}{lccccc}
         & $S^2$ & $S^{10}$ & $S^{50}$ & $S^{100}$\\
        \hline
        $\text{DNN}$  & $ 94.12  \pm 0.67  $ & $ 96.22 \pm 0.63 $ & $ 75.93 \pm 1.07 $ & $62.53 \pm 1.35$\\
        $\text{tDNN}$ & $ \mathbf{94.88 \pm 0.53} $ & $\mathbf{ 97.13 \pm 0.39} $ & $ 80.07 \pm 0.95$ & $68.26 \pm 1.16$\\
        $\text{iDNN}$ & $94.69  \pm 0.65 $ & $  97.11\pm 0.41 $ & $ \mathbf{80.72 \pm 0.94} $ &$\mathbf{68.43 \pm 1.20} $ \\

        \hline
        $\text{kNN}$ & $ 92.16\pm 0.77  $ & $ 94.98 \pm 0.60$ & $ 69.18\pm 1.44$ & $56.24 \pm 1.30$\\
        $\text{LR}$& $ 92.98 \pm 0.76  $ & $ 88.64 \pm 0.76$ & $ 72.38 \pm 1.14$ & $66.73 \pm 1.37$\\
        $\text{RF}$& $ 93.66 \pm 0.83  $ & $89.93 \pm 0.65$ & $ 70.29 \pm 1.48$ & $62.29 \pm 1.45$\\
        $\text{SVM}$& $ 94.07 \pm 0.1  $ & $96.85 \pm 0.44$ & $ 79.38 \pm 1.15$ & $68.25 \pm 1.18$\\
    \end{tabular}
    \label{tab:test-sphere}
\end{table}

\subsection{The planar shape}
Let $z=(z_1,\ldots, z_k)$, with $z_1,\ldots, z_k\in \R^2$, be a set of $k$ landmarks. The planar shape $\Sigma_2^k$ is the collection of $z$'s modulo under the Euclidean motions, including translation, scaling, and rotation. One has $\Sigma_2^k=S^{2k-3}/SO(2)$,  the quotient of sphere by the action of $SO(2)$ (or the rotation),  the group of $2\times 2$ special orthogonal matrices;
A point in $\Sigma_2^k$ can be identified as the orbit of some $u\in S^{2k-3}$, which we denote as $\sigma(z)$. Viewing $z$ as a vector of complex numbers, one can embed $\Sigma_2^k$ into $S(k,\mathbb C)$, the space of $k\times k$ complex Hermitian matrices, via the Veronese-Whitney embedding (see, e.g., \cite{rabimono}):
\begin{equation}
\label{eq-planaremb}
J(\sigma(z))=uu^*=((u_i\bar{u}_j))_{1\leq, i,j\leq k}.
\end{equation}
One can verify that $J$ is equivariant (see \cite{kendall84}) with respect to the Lie group
$$H=SU(k)=\{A\in GL(k, \mathbb C), AA^*=I, \det(A)=I\},$$
with its action on $\Sigma_2^k$ induced by left multiplication.

We consider a planar  shape data set,  which involves measurements of a group of typically developing children and a group of children suffering the ADHD (Attention deficit hyperactivity disorder).  ADHD  is one of the most common psychiatric  disorders for children that can continue through adolescence and adulthood. Symptoms include difficulty staying focused and paying attention, difficulty controlling behavior, and hyperactivity (over-activity). In general, ADHD has three subtypes: (1) ADHD hyperactive-impulsive, (2) ADHD-inattentive, (3) Combined hyperactive-impulsive and inattentive  (ADHD-combined).  ADHD-200 Dataset (\url{http://fcon_1000.projects.nitrc.org/indi/adhd200/}) is a data set that  records both anatomical and resting-state functional MRI data of 776 labeled subjects across 8 independent imaging sites, 491 of which were obtained from typically developing individuals and 285 in children and adolescents with ADHD (ages: 7-21 years old).  
 The planar Corpus Callosum shape data are extracted, with 50 landmarks on the contour of the Corpus Callosum of each subject (see \cite{hongtu15}). See  Figure \ref{fig:landmark} for a plot of the raw landmarks of a normal developing child and an ADHD child)
 After quality control,  647 CC shape data out of 776 subjects were obtained, which included 404 ($n_1$) typically developing children, 150 ($n_2$) diagnosed with ADHD-Combined, 8 ($n_3$) diagnosed with ADHD-Hyperactive-Impulsive,  and 85 ($n_4$) diagnosed with ADHD-Inattentive. Therefore, the data lie in the space $\Sigma_2^{50}$, which has a high dimension of $2\times 50-4=96$.  
 
 \begin{table}[htbp]
    \centering
    \caption{Demographic information about processed ADHD-200 CC shape dataset, including disease status, age, and gender.}
   \begin{tabular}{cccc}
    \hline Disease status & Num. & Range of age in years(mean) & Gender(female/male) \\
    \hline Typically Developing Children & 404 & $7.09-21.83(12.43)$ & $179 / 225$ \\
    ADHD-Combined & 150 & $7.17-20.15(10.96)$ & $39 / 111$ \\
    ADHD-Hyperactive/Impulsive & 8 & $9.22-20.89(14.69)$ & $1 / 7$ \\
    ADHD-Inattentive & 85 & $7.43-17.61(12.23)$ & $18 / 67$ \\
    All data & 647 & $7.09-21.83(12.09)$ & $237 / 410$ \\
    \hline
    \end{tabular}
    \label{tab-Demo}
\end{table}

As shown in the table \ref{tab-Demo}, we consider the classification problem with 4 different classes. We also divided the dataset into a $75$ percent training set and a $25$ percent test set and evaluated the classification accuracy in the test set compared to other learning methods. Since the sample size is unbalanced, the total number of some classes is too small, i.e., ADHD-Hyperactive case. We also considered the classification with two classes by combing those ADHD samples into one class shown in the right figure in Figure \ref{fig:landmark}.

\begin{figure*}[htbp]
    \centering
    \subfloat[Mean shapes of different classes]{
    \includegraphics[width=1.0\textwidth]{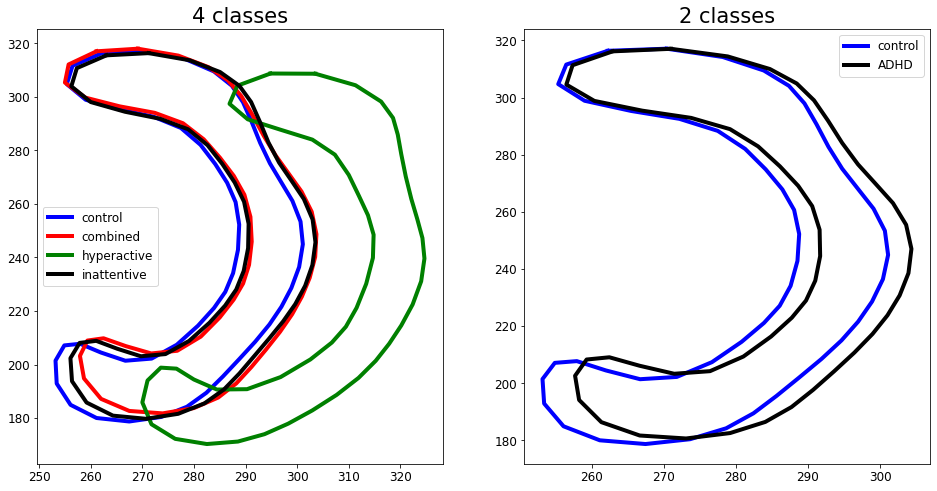}}
    
    \caption{CC shapes}
  \label{fig:landmark}
\end{figure*}

Similar to the sphere case, we select the k-nearest neighbors (kNN), the random forest (RF), the logistic regression (LR), and the support vector machine (SVM) with the radial basis function (RBF) kernel as competing estimators. The tuning parameters in each method are selected by evaluation on a validation data set whose size is $25\%$ of the training set. For all DNN models, we utilize the same network architecture of $5$ hidden layers with the numbers of width $(100,100,100,100,100)$. The DNN model is applied to the raw data, while the eDNN model is applied to the embedded data by Veronese-Whitney embedding. And the preshape data (normalized raw data) lying in the hyperspere $S^{100}$ is used for the tDNN model and iDNN model. In the iDNN model, we chose the north pole and south pole $(\pm 1,0,..,0)$ as base points and utilized the geometry of the hypersphere as before. In the tDNN model, we pick the $Frechet$ mean of the training set as the base point and transform all data in a batch to tangent vectors before feeding to the neural network. All models are trained with Adam optimizer. The competition results can be observed in Table 3. Our tDNN model achieves the best accuracy at $65.84\pm 3.10$ among 50 splits in the 2 classes case. Also, our iDNN model showed the best result of $63.55\pm 3.80$ in the 4 classes case.

\begin{table}[htbp]
    \centering
    \caption{The average accuracy on the test dataset is calculated over $50$ random splits. The $5$-layers network (with $100$ hidden nodes in each layer) is used for our DNN models in all experiments. Consequently, our tDNN model obtains the best accuracy in the 2 classes case while our iDNN model achieves the best accuracy in the 4 classes case. Furthermore, all our eDNN, tDNN and iDNN models outperform the classical DNN model, indicating the advantages of our frameworks. }
\begin{tabular}{lccc}
         & 4 Classes & 2 Classes\\
        \hline
        $\text{DNN}$  & $ 56.40  \pm 10.83  $ & $ 61.09 \pm 8.44 $ \\
        $\text{eDNN}$  & $   62.98\pm 3.91  $ & $  63.81\pm 3.72 $ \\
        $\text{tDNN}$ & $  63.20\pm 3.70 $ & $  \mathbf{65.84\pm 3.10} $ \\
        $\text{iDNN}$ & $ \mathbf{ 63.55\pm 3.80 }$ & $  65.42\pm 3.41 $ \\

        \hline
        $\text{kNN}$ & $ 57.62 \pm 3.37$ & $ 61.26\pm 3.84$\\
        $\text{LR}$& $ 61.35 \pm 3.54  $ &  $ 59.58 \pm 3.44$\\
        $\text{RF}$& $ 61.38 \pm 3.50  $ &  $ 63.20 \pm 3.13$\\
        $\text{SVM}$& $ 61.80 \pm 3.92  $ &  $ 64.89 \pm 3.64$\\
    \end{tabular}
    \label{tab:planar1}
\end{table}

\subsection{Symmetric semi-positive definite matrix (SPD)}

Covariance matrices are ubiquitous and attractive in machine learning applications due to their capacity to capture the structure inside the data. The main challenge is to take the particular geometry of the Riemannian manifold of symmetric positive definite ($\text{SPD}$) matrices into consideration. The space $\SPD(d)$ of all $d\times d$ positive definite matrices belongs to an important class of manifolds that possesses particular geometric structures, which should be taken into account for building the DNNs. \cite{Fletcher2007250} investigates its Riemannian structure and provides somewhat concrete forms of all its geometric quantities.  \cite{dryden2009non} studies different notions of means and averages in $\SPD(3)$ with respect to different distance metrics and considers applications to DTI data and covariance matrices.

Under the Riemannian framework of tensor computing \cite{pennec2006riemannian}, several metrics play an important role in machine learning on \text{SPD} matrices. Generally, the Riemannian distance $d(P_1, P_2)$ between two points $P_1$ and $P_2$ on the manifold is defined as the length of the geodesic $\gamma_{P_1 \to P_2}$, i.e., the shortest parameterized curve connecting them. In the $\text{SPD}$ manifold, the distance under the affine metric could be computed as \cite{pennec2006riemannian}:

\begin{align*}
d\left(P_{1}, P_{2}\right)=\frac{1}{2}\left\|\log \left(P_{1}^{-\frac{1}{2}} P_{2} P_{1}^{-\frac{1}{2}}\right)\right\|_{F}.
\end{align*}

Other important natural mappings to and from the manifold and its tangent bundle are the logarithmic mapping $Log_{P_0}$ and the exponential mapping $Exp_{P_0}$ at the point $P_0$. Under the affine metric, those two mappings are known in closed form:
\begin{align*}
\forall S \in \mathcal{T}_{P_{0}}, Exp_{P_{0}}(S)=P_{0}^{\frac{1}{2}} \exp \left(P_{0}^{-\frac{1}{2}} S P_{0}^{-\frac{1}{2}}\right) P_{0}^{\frac{1}{2}} \in \SPD(d) 
\end{align*}

\begin{align*}
\forall P \in \SPD(d),  Log_{P_{0}}(P)=P_{0}^{\frac{1}{2}} \log \left(P_{0}^{-\frac{1}{2}} P P_{0}^{-\frac{1}{2}}\right) P_{0}^{\frac{1}{2}} \in \mathcal{T}_{P_{0}},
\end{align*}
where $\mathcal{T}_{P_{0}}$ denotes the tangent space at $P_0$. Furthermore, we consider the log map on the matrix as the embedding $J$, mapping $\SPD(d)$ to $Sym(d)$, the space of the symmetric matrix. For example, let $P \in \SPD(d)$ with a spectral decomposition $P^{(l)}=U\Sigma U^{T}$, we have the log-map of $A$ as $\log(P) = U\log(\Sigma)U^{T}$ where $\log(\Sigma)$ denotes the diagonal matrix whose diagonal entries are the logarithms of the diagonal entries of $\Sigma$. Moreover, the embedding $J$ is a diffeomorphism, equivariant with respect to the actions of $GL(d,\R)$, the $d$ by $d$ general linear group. That is, for $H\in GL(d,\R)$, we have $\log(HPH^{T}) = H \log(P)H^{-1}$. 

In the context of deep networks on $\SPD$, we build up our model in terms of SPDNet introduced by \cite{huang2017riemannian}, which mimicked the classical neural networks with the stage of computing an invariant representation of the input data points and a second stage devoted to performing the final classification. The SPDNet exploited the geometry based on threefold layers:

The BiMap (bilinear transformation) layer, analogous to the usual dense layer; the induced dimension reduction eases the computational burden often found in learning algorithms on SPD data:
$$
X^{(l)}=W^{(l)^{T}} P^{(l-1)} W^{(l)} \text { with } W^{(l)} \text { semi-orthogonal. }
$$
The ReEig (rectified eigenvalues activation) layer, analogous to the ReLU activation, can also be seen as an Eigen-regularization, protecting the matrices from degeneracy:
$$
X^{(l)}=U^{(l)} \max \left(\Sigma^{(l)}, \epsilon I_{n}\right) U^{(l)^{T}}, \text { with } P^{(l)}=U^{(l)} \Sigma^{(l)} U^{(l)^{T}.}
$$
The LogEig (log eigenvalues Euclidean projection) layer:
$X^{(l)}=\operatorname{vec}\left(U^{(l)} \log \left(\Sigma^{(l)}\right) U^{(l)^{T}}\right)$, with again $U^{(l)}$ the eigenspace of $P^{(l)}$.

Under our framework, the SPDNet is both an eDNN and a tDNN model. The LogEig layer applies the logarithmic mapping $\log_{I}(P) = \operatorname{vec}\left(U^{(l)} \log \left(\Sigma^{(l)}\right) U^{(l)^{T}}\right)$, which is identical to the transformation in the LogEig layer. Thus, SPDNet can also be viewed as a tDNN model. In our experiments, we only consider tDNN models as one tangent space from the base point is sufficient to cover the entire manifold. Our eDNN models on $\SPD(p)$ consist of 3 BiMap layers, 3 ReEig layers, one LogEig layer (for embedding), and a 5-layer DNN with 100 hidden nodes per layer. In tDNN models, we replace the LogEig layer with the intrinsic logarithmic mapping under different metrics.

In our experiments, we evaluate the performance of tDNN and eDNN models on the AFEW and HDM05 datasets using the same setup and protocol as in \cite{huang2017riemannian}. The AFEW dataset \cite{dhall2011static} includes 600 video clips with per-frame annotations of valence and arousal levels and 68 facial landmarks, depicting 7 classes of emotions. The HDM05 dataset \cite{muller2007mocap} contains over three hours of motion capture data in C3D and ASF/AMC formats, covering more than 70 motion classes across multiple actors. We divide the data into a 75-25 percent training-test split, with 10 repetitions, and use the validation set (25 percent of training data) to tune hyperparameters. We implement tDNN models on both affine metrics and log-Euclidean metrics, using the Frechet mean of the batch as the base point. As shown in Table \ref{tab:spd}, our tDNN model under the Log-Euclidean metric achieves the best results on both datasets, with a 35.85 $\pm$ 1.49 accuracy on the AFEW dataset and 62.59 $\pm$ 1.35 accuracy on the HDM05 dataset.

\begin{table}[htbp]
    \centering
    \caption{The accuracy of the test set was reported. We follow the setup and protocols in  \cite{huang2017riemannian} and our tDNN models outperform the eDNN (SPDNet) under both log and affine metrics.  }
\begin{tabular}{lccc}
\hline Data & AFEW & HDM05    \\
\hline$(n, d)$ & $(2135,400^2)$ &$(2086,93^2)$   \\
\hline \hline eDNN(SPDNet) & $34.23 \pm 1.44 $ & $61.35 \pm 1.12$  \\
tDNN-Log & $\mathbf{35.85 \pm 1.49}$ & $\mathbf{62.59 \pm 1.35}$    \\
tDNN-Affine &$35.31 \pm 1.68$  & $62.23 \pm 1.43$  \\
\hline
\end{tabular}
    \label{tab:spd}
\end{table}

\section{Discussion}
In this work, we develop intrinsic and extrinsic deep neural network architectures on manifolds and  characterize their theoretical properties in terms of approximation error and statistical error of the ERM based estimator. The neural networks explore the underlying geometry of the manifolds for learning and inference. Future work will be focused on developing convolutional neural networks in manifolds for image classifications of manifold-values images, which have abundant  applications in medical imaging and computer vision.  

\section*{Acknowledgments}
We would like to thank Dong Quan Nguyen, Steve Rosenberg, and Bayan Saparbayeva for the very helpful discussions. We acknowledge the generous support of NSF grants DMS CAREER 1654579 and DMS 2113642. The second author was supported by INHA UNIVERSITY Research Grant.

\bibliographystyle{abbrv}
\bibliography{references}

\begin{thebibliography}{10}

\bibitem{dti-ref}
A.~Alexander, J.~Lee, M.~Lazar, and A.~Field.
\newblock Diffusion tensor imaging of the brain.
\newblock {\em Neurotherapeutics}, 4(3):316--329, 2007.

\bibitem{Bahdanau2015NeuralMT}
D.~Bahdanau, K.~Cho, and Y.~Bengio.
\newblock Neural machine translation by jointly learning to align and
  translate.
\newblock {\em Proceedings of the 4th International Conference on Learning
  Representations}, abs/1409.0473, 2015.

\bibitem{bauer2019deep}
B.~Bauer and M.~Kohler.
\newblock On deep learning as a remedy for the curse of dimensionality in
  nonparametric regression.
\newblock {\em The Annals of Statistics}, 47(4):2261--2285, 2019.

\bibitem{rabimono}
A.~Bhattacharya and R.~Bhattacharya.
\newblock {\em Nonparametric Inference on Manifolds: with Applications to Shape
  Spaces}.
\newblock Cambridge University Press, 2012.
\newblock IMS monographs \#2.

\bibitem{linclt}
R.~{Bhattacharya} and L.~{Lin}.
\newblock {Omnibus CLTs for {F}r\'echet means and nonparametric inference on
  non-Euclidean spaces}.
\newblock {\em The Proceedings of the American Mathematical Society},
  145:413--428, 2017.

\bibitem{2019chen}
M.~{Chen}, H.~{Jiang}, W.~{Liao}, and T.~{Zhao}.
\newblock {Nonparametric Regression on Low-Dimensional Manifolds using Deep
  ReLU Networks}.
\newblock {\em arXiv e-prints}, page arXiv:1908.01842, 2019.

\bibitem{dhall2011static}
A.~Dhall, R.~Goecke, S.~Lucey, and T.~Gedeon.
\newblock Static facial expression analysis in tough conditions: Data,
  evaluation protocol and benchmark.
\newblock In {\em 2011 IEEE International Conference on Computer Vision
  Workshops (ICCV Workshops)}, pages 2106--2112. IEEE, 2011.

\bibitem{vecdata}
T.~Downs, J.~Liebman, and W.~Mackay.
\newblock Statistical methods for vectorcardiogram orientations.
\newblock {\em In Vectorcardiography 2: Proc. XIth International Symposium on
  Vectorcardiography (I. Hoffman, R.I. Hamby and E. Glassman, Eds.)}, pages
  216--222, 1971.
\newblock North-Holland, Amsterdam.

\bibitem{dryden2009non}
I.~L. Dryden, A.~Koloydenko, and D.~Zhou.
\newblock Non-euclidean statistics for covariance matrices, with applications
  to diffusion tensor imaging.
\newblock {\em The Annals of Applied Statistics}, 3(3):1102--1123, 2009.

\bibitem{Fan2019ASO}
J.~Fan, C.~Ma, and Y.~Zhong.
\newblock A selective overview of deep learning.
\newblock {\em ArXiv}, abs/1904.05526, 2019.

\bibitem{fisher87}
N.~Fisher, T.~Lewis, and B.~Embleton.
\newblock {\em Statistical Analysis of Spherical Data}.
\newblock Cambridge Uni. Press, Cambridge, 1987.

\bibitem{fisherra53}
R.~Fisher.
\newblock Dispersion on a sphere.
\newblock {\em Proceedings of the Royal Society A}, 217:295--305, 1953.

\bibitem{Fletcher2007250}
P.~T. Fletcher and S.~Joshi.
\newblock Riemannian geometry for the statistical analysis of diffusion tensor
  data.
\newblock {\em Signal Processing}, 87(2):250 -- 262, 2007.
\newblock Tensor Signal Processing.

\bibitem{matrixback-prop}
M.~{Harandi} and B.~{Fernando}.
\newblock {Generalized BackPropagation, {\'E}tude De Cas: Orthogonality}.
\newblock {\em arXiv e-prints}, page arXiv:1611.05927, 2016.

\bibitem{speechhinton}
G.~{Hinton}, L.~{Deng}, D.~{Yu}, G.~E. {Dahl}, A.~{Mohamed}, N.~{Jaitly},
  A.~{Senior}, V.~{Vanhoucke}, P.~{Nguyen}, T.~N. {Sainath}, and
  B.~{Kingsbury}.
\newblock Deep neural networks for acoustic modeling in speech recognition: The
  shared views of four research groups.
\newblock {\em IEEE Signal Processing Magazine}, 29(6):82--97, 2012.

\bibitem{subspacepaper}
J.~Ho, K.-C. Lee, M.-H. Yang, and D.~Kriegman.
\newblock Visual tracking using learned linear subspaces.
\newblock In {\em Computer Vision and Pattern Recognition, 2004. CVPR 2004.
  Proceedings of the 2004 IEEE Computer Society Conference on}, volume~1, pages
  I--782--I--789 Vol.1, June 2004.

\bibitem{hongtu15}
C.~Huang, M.~Styner, and H.~Zhu.
\newblock Clustering high-dimensional landmark-based two-dimensional shape
  data.
\newblock {\em Journal of the American Statistical Association},
  110(511):946--961, 2015.

\bibitem{chunfeng}
C.~Huang, H.~Zhang, and S.~Robeson.
\newblock On the validity of commonly used covariance and variogram functions
  on the sphere.
\newblock {\em Mathematical Geosciences}, 43(6):721--733, 2011.

\bibitem{deepspd}
Z.~Huang and L.~V. Gool.
\newblock A riemannian network for spd matrix learning.
\newblock In {\em Proceedings of the Thirty-First AAAI Conference on Artificial
  Intelligence}, AAAI'17, page 2036–2042. AAAI Press, 2017.

\bibitem{huang2017riemannian}
Z.~Huang and L.~Van~Gool.
\newblock A riemannian network for spd matrix learning.
\newblock In {\em Proceedings of the AAAI Conference on Artificial
  Intelligence}, volume~31, 2017.

\bibitem{deep-LG}
Z.~{Huang}, C.~{Wan}, T.~{Probst}, and L.~{Van Gool}.
\newblock Deep learning on lie groups for skeleton-based action recognition.
\newblock In {\em 2017 IEEE Conference on Computer Vision and Pattern
  Recognition (CVPR)}, pages 1243--1252, 2017.

\bibitem{deepgra}
Z.~{Huang}, J.~{Wu}, and L.~{Van Gool}.
\newblock {Building Deep Networks on Grassmann Manifolds}.
\newblock {\em arXiv e-prints}, page arXiv:1611.05742, 2016.

\bibitem{jun2008}
M.~Jun and M.~L. Stein.
\newblock Nonstationary covariance models for global data.
\newblock {\em The Annals of Applied Statistics}, 2(4):1271--1289, 12 2008.

\bibitem{kendall84}
D.~G. Kendall.
\newblock Shape manifolds, procrustean metrics, and complex projective spaces.
\newblock {\em Bull. of the London Math. Soc.}, 16:81--121, 1984.

\bibitem{kim2018fast}
Y.~Kim, I.~Ohn, and D.~Kim.
\newblock Fast convergence rates of deep neural networks for classification.
\newblock {\em arXiv preprint arXiv:1812.03599}, 2018.

\bibitem{kingma2014adam}
D.~P. Kingma and J.~Ba.
\newblock Adam: A method for stochastic optimization.
\newblock {\em arXiv preprint arXiv:1412.6980}, 2014.

\bibitem{Kohler2019OnTR}
M.~Kohler and S.~Langer.
\newblock On the rate of convergence of fully connected very deep neural
  network regression estimates.
\newblock {\em ArXiv}, abs/1908.11133, 2019.

\bibitem{kolaczyk2020}
E.~D. Kolaczyk, L.~Lin, S.~Rosenberg, J.~Walters, and J.~Xu.
\newblock Averages of unlabeled networks: Geometric characterization and
  asymptotic behavior.
\newblock {\em Ann. Statist.}, 48(1):514--538, 02 2020.

\bibitem{krizhevsky2012imagenet}
A.~Krizhevsky, I.~Sutskever, and G.~E. Hinton.
\newblock Imagenet classification with deep convolutional neural networks.
\newblock In {\em Advances in neural information processing systems}, pages
  1097--1105, 2012.

\bibitem{lin2019}
L.~Lin, N.~Mu, P.~Cheung, and D.~Dunson.
\newblock Extrinsic gaussian processes for regression and classification on
  manifolds.
\newblock {\em Bayesian Anal.}, 14(3):887--906, 09 2019.

\bibitem{linjasa}
L.~Lin, B.~S. Thomas, H.~Zhu, and D.~B. Dunson.
\newblock Extrinsic local regression on manifold-valued data.
\newblock {\em Journal of the American Statistical Association},
  112(519):1261--1273, 2017.

\bibitem{maridajpuu}
K.~Mardia and P.~Jupp.
\newblock {\em Directional Statistics}.
\newblock Wiley, New York, 2000.

\bibitem{muller2007mocap}
M.~M{\"u}ller, T.~R{\"o}der, M.~Clausen, B.~Eberhardt, B.~Kr{\"u}ger, and
  A.~Weber.
\newblock Mocap database hdm05.
\newblock {\em Institut f{\"u}r Informatik II, Universit{\"a}t Bonn}, 2(7),
  2007.

\bibitem{ohn2019smooth}
I.~Ohn and Y.~Kim.
\newblock Smooth function approximation by deep neural networks with general
  activation functions.
\newblock {\em Entropy}, 21(7):627, 2019.

\bibitem{pennec2006riemannian}
X.~Pennec, P.~Fillard, and N.~Ayache.
\newblock A riemannian framework for tensor computing.
\newblock {\em International Journal of computer vision}, 66(1):41--66, 2006.

\bibitem{SchmidtHieber2019DeepRN}
J.~Schmidt-Hieber.
\newblock Deep relu network approximation of functions on a manifold.
\newblock {\em ArXiv}, abs/1908.00695, 2019.

\bibitem{schmidt-hieber2020}
J.~Schmidt-Hieber.
\newblock Nonparametric regression using deep neural networks with relu
  activation function.
\newblock {\em Ann. Statist.}, 48(4):1875--1897, 08 2020.

\bibitem{suzuki2018adaptivity}
T.~Suzuki.
\newblock Adaptivity of deep relu network for learning in besov and mixed
  smooth besov spaces: optimal rate and curse of dimensionality.
\newblock {\em arXiv preprint arXiv:1810.08033}, 2018.

\bibitem{facialsubpace}
G.~Teja and S.~Ravi.
\newblock Face recognition using subspaces techniques.
\newblock In {\em Recent Trends In Information Technology (ICRTIT), 2012
  International Conference on}, pages 103--107, April 2012.

\bibitem{loring2011introduction}
L.~W. Tu.
\newblock {\em An introduction to manifolds}.
\newblock Springer., 2011.

\bibitem{Voulodimos2018DeepLF}
A.~Voulodimos, N.~Doulamis, A.~Doulamis, and E.~Protopapadakis.
\newblock Deep learning for computer vision: A brief review.
\newblock {\em Computational Intelligence and Neuroscience}, 2018, 2018.

\bibitem{Zhang2018GrassmannianLE}
J.~Zhang, G.~Zhu, R.~Heath, and K.~Huang.
\newblock Grassmannian learning: Embedding geometry awareness in shallow and
  deep learning.
\newblock {\em ArXiv}, abs/1808.02229, 2018.

\end{thebibliography}

\end{document}